\newtheorem{theorem}{Theorem}
\newtheorem{lemma}{Lemma}
\theoremstyle{plain}
\title{\textsc{Silencer}: From Discovery to Mitigation of Self-Bias in LLM-as-Benchmark-Generator}
\author {
    \textbf{Peiwen Yuan}\textsuperscript{\rm 1}, \hspace{0cm}
    \textbf{Yiwei Li}\textsuperscript{\rm 1}, \hspace{0cm} 
    \textbf{Shaoxiong Feng}\textsuperscript{\rm 2}, \hspace{0cm}
    \textbf{Xinglin Wang}\textsuperscript{\rm 1}, \hspace{0cm} 
    \textbf{Yueqi Zhang}\textsuperscript{\rm 1}, \hspace{0cm} \\
    \textbf{Jiayi Shi}\textsuperscript{\rm 1}, \hspace{0cm}
    \textbf{Chuyi Tan}\textsuperscript{\rm 1}, \hspace{0cm}
    \textbf{Boyuan Pan}\textsuperscript{\rm 2}\textbf{,} \hspace{0cm} 
    \textbf{Yao Hu}\textsuperscript{\rm 2}\textbf{,} \hspace{0cm} 
    \textbf{Kan Li}\textsuperscript{\rm 1}\footnotemark[2] \\
    \textsuperscript{\rm 1} School of Computer Science, Beijing Institute of Technology \\
    \textsuperscript{\rm 2} Xiaohongshu Inc \\
    \texttt{\{peiwenyuan,liyiwei,wangxinglin,zhangyq,shijiayi,tanchuyi,likan\}@bit.edu.cn} \\
    \texttt{\{shaoxiongfeng2023\}@gmail.com} \  \texttt{\{panboyuan,xiahou\}@xiaohongshu.com}
}
\begin{document}
\maketitle
\renewcommand{\thefootnote}{\fnsymbol{footnote}} 
\footnotetext[2]{Corresponding author.} 
\begin{abstract}
LLM-as-Benchmark-Generator methods have been widely studied as a supplement to human annotators for scalable evaluation, while the potential biases within this paradigm remain underexplored. 
In this work, we systematically define and validate the phenomenon of inflated performance in models evaluated on their self-generated benchmarks, referred to as self-bias, and attribute it to sub-biases arising from question domain, language style, and wrong labels.
On this basis, we propose \textsc{Silencer}, a general framework that leverages the heterogeneity between multiple generators at both the sample and benchmark levels to neutralize bias and generate high-quality, self-bias-silenced benchmark. Experimental results across various settings demonstrate that \textsc{Silencer} can suppress self-bias to near zero, significantly improve evaluation effectiveness of the generated benchmark (with an average improvement from 0.655 to 0.833 in Pearson correlation with high-quality human-annotated benchmark), while also exhibiting strong generalizability.
\end{abstract}
\section{Introduction}

 
The rapid evolution of large language models (LLMs) \citep{r1,o3} has led to three key trends in evaluation:
(1) benchmarks saturate more quickly, increasing the demand for continual benchmark construction \citep{mmlupro}; 
(2) LLMs are applied to a wider range of downstream tasks, amplifying the need for more customized benchmarks \citep{evalsurvey};  
(3) as LLMs achieve and even surpass human capability, ever greater intelligence is demanded of benchmark generators \citep{hardbench}.  
Traditional human-centered benchmark construction paradigm struggles to meet these demands due to high annotation costs and the inherent ceiling of human intelligence.
To address this, LLM-as-Benchmark-Generator has been proposed as a potential paradigm \citep{datagen, autocodebench, evaleffi,benchmaker} for automatically constructing customizable benchmarks that are scalable in difficulty based on LLMs.

However, the potential biases in LLM-generated benchmarks remain underexplored, which may lead to unreliable evaluation results.
In this work, we investigate the \textit{self-bias of LLM-as-Benchmark-Generator}, a phenomenon where \textit{LLMs tend to achieve overestimated performance on benchmarks they generate themselves}.
Specifically, we first introduce a formal and quantifiable definition of self-bias $\mathcal{B}$, and then empirically demonstrate its widespread presence across various LLMs and evaluation tasks.
To gain a deeper understanding of the origin of self-bias, we heuristically disentangle $\mathcal{B}$ into three contributors: \textbf{q}uestion domain bias $\mathcal{B}^q$, language \textbf{s}tyle bias $\mathcal{B}^s$, and wrong \textbf{l}abel bias $\mathcal{B}^l$.
Investigative experiments show that the three sub-biases collectively contribute to self-bias, with their relative contributions ordered as $\mathcal{B}^l > \mathcal{B}^q > \mathcal{B}^s$.

To eliminate self-bias for achieving more accurate evaluation results, we propose \textbf{\textsc{Silencer}}, a general \textbf{S}elf-b\textbf{i}as neutra\textbf{l}ization fram\textbf{e}work for be\textbf{nc}hmark gen\textbf{er}ation that exploits the heterogeneity of multiple generators to construct self-bias-silenced benchmark.
Specifically, to mitigate the sub-biases at sample level, we introduce (1) \textbf{Attribute Integration} to encourage the LLM generators to create more domain-unbiased question sets ($\mathcal{B}^q$); (2) \textbf{Cross Paraphrase} to avoid stylistic monotony in language ($\mathcal{B}^s$); (3) \textbf{Label Calibration} to reduce the inherent alignment between wrong labels and predictions offered by generators ($\mathcal{B}^l$). 
To suppress self-bias at benchmark level, we propose a \textbf{Bias-Neutralizing Ensemble} algorithm, which iterates through two steps: (1) estimating each generator’s self-bias by measuring the consistency of its evaluation results with those of the ensembled benchmark, and (2) ensembling the generated benchmarks with weights that are inversely proportional to the estimated self-bias, thereby reducing the self-bias introduced to the final ensembled benchmark.

We validate the effectiveness and generalizability of \textsc{Silencer} across three tasks, seven LLM generators, and two LLM-as-Benchmark-Generator methods. Comprehensive experimental results demonstrate that \textsc{Silencer} significantly suppresses self-bias, improving evaluation consistency of LLM-generated benchmark with high-quality human-annotated benchmark by 27.2\% (from 0.655 to 0.833) on average. Detailed ablation studies and visualizations confirm the effectiveness and working mechanism of each component in \textsc{Silencer}. Building on this, we further explore the impact of factors such as the number of generators and benchmark size on self-bias and evaluation effectiveness, offering practical insights for applying \textsc{Silencer} in real scenarios.

\section{Related Works}
\label{sec:related}

\subsection{LLM-as-Benchmark-Generator}
Benchmark saturation \citep{saturation}, data contamination \citep{datacontamination}, and accelerated LLM evolution have collectively motivated the research community to leverage the LLMs for automatic benchmark construction, moving toward the vision of self-evaluation \citep{evaleffi}.
In particular, a specific LLM adopts a designated generation strategy to construct a benchmark based on benchmark definition inputs such as the task description and seed samples \citep{s3eval,datagen}.
The evaluation effectiveness of the generated benchmark is typically validated via benchmark agreement testing \citep{bat}, where the performance of some LLMs on LLM-generated and human-annotated benchmarks are calculated using certain agreement metric (e.g., rank correlation).
Metrics such as sample correctness and diversity are computed to support a more fine-grained evaluation of benchmark quality \citep{benchmaker}.
However, existing studies lack consideration of the generator's self-bias, raising the risk of performance overestimation.

\subsection{Self-Bias of LLMs}
Variations in training corpora, model architectures, and training strategies cause LLMs to exhibit different domain-specific biases \citep{biassurvey}. 
Among them, self-bias generally refers to the tendency of the LLM to favour its own generations when acting as a judge \citep{judgebias1,judgeselfbias,lmasexaminer}.
In this work, we extend the notion of self-bias to the setting of LLM-as-Benchmark-Generator. 
While judge-side self-bias primarily stems from content-based self-preference \citep{llmjudgesurvey}, our experiments show that self-bias of benchmark generators arises from multiple, more intricate factors and thus requires mitigation efforts from multiple aspects \footnote{Unless otherwise noted, all subsequent mentions self-bias refers to that in LLM-as-Benchmark-Generator.}.

\section{Preliminary}
\subsection{Formal Definition and Verification of Self-Bias}
\label{sec:defselfbias}
Humans tend to perform better on tests they create themselves \citep{humanself2,humanself}. This motivates the necessity to investigate whether LLMs show similar self-bias when evaluated on benchmarks they generate, which previous studies have largely neglected \citep{datagen,benchmaker}.
We define $R(\mathcal{M}|\mathcal{M}^{que}, \mathcal{M}^{lab})$ as the performance (e.g. accuracy) of model $\mathcal{M}$ evaluated on benchmarks where questions are provided by $\mathcal{M}^{que}$ and labels are provided by $\mathcal{M}^{lab}$.
As different benchmarks vary in difficulty, to compare the evaluation results across them, we calculate the relative performance of $\mathcal{M}$ with respect to $K$ reference models $\mathcal{M}^{\text{ref}}_{1:K}$ as follows:
\begin{equation}
R\left(\mathcal{M} \mid \mathcal{M}^{\text{que}}, \mathcal{M}^{\text{lab}}, \mathcal{M}^{\text{ref}}_{1:K} \right) = 
\frac{K \cdot R\left(\mathcal{M} \mid \mathcal{M}^{\text{que}}, \mathcal{M}^{\text{lab}} \right)}
{\sum_{k=1}^{K} R\left(\mathcal{M}^{\text{ref}}_k \mid \mathcal{M}^{\text{que}}, \mathcal{M}^{\text{lab}} \right)}
\end{equation}
On this basis, the evaluation bias of 
$\mathcal{M}$ on benchmark generated by $\mathcal{M}^{\text{gen}}$ can be calculated as:
\begin{equation}
    \mathcal{B}(\mathcal{M}|\mathcal{M}^{\text{gen}}) = R(\mathcal{M}|\mathcal{M}^{\text{gen}}, \mathcal{M}^{\text{gen}},\mathcal{M}^{\text{ref}}_{1:K}) - R(\mathcal{M}|\mathcal{M}^{\text{human}}, \mathcal{M}^{\text{human}},\mathcal{M}^{\text{ref}}_{1:K})
    \label{equation: B}
\end{equation}
where $R(\mathcal{M}|\mathcal{M}^{\text{human}}, \mathcal{M}^{\text{human}},\mathcal{M}^{\text{ref}}_{1:K})$ denotes the relative performance on human-annotated benchmark.
Accordingly, the self-bias of $\mathcal{M}$ can be denoted as $\mathcal{B}(\mathcal{M}|\mathcal{M})$, abbr. $\mathcal{B}(\mathcal{M})$. 
 A larger value of $\mathcal{B}(\mathcal{M})$ indicates a greater degree of self-bias in the benchmark generated by the $\mathcal{M}$.

Based on the definition above, we validate the existence of self-bias on math reasoning and language understanding tasks. 
%
We select human-annotated MATH \citep{math} and MMLU-Pro \citep{mmlupro} benchmarks for comparison and employ BenchMaker method \citep{benchmaker} for benchmark generation, which has been validated to construct high-quality benchmarks (See a detailed introduction of BenchMaker in Appendix~\ref{sec:appendix-baselines} and exact experimental settings in \S\ref{sec:e1}).
The seven LLMs illustrated in Figure~\ref{fig:format} are employed as both generators, testers and reference models, with the results of evaluation bias presented accordingly. 
Based on Figure~\ref{fig:format}, we identify two primary findings: (1) Models generally perform better on benchmarks they generate themselves (highlighted by red boxes on the diagonal), exhibiting average self-biases of 0.103 and 0.058 across the two tasks, respectively. (2) Some models demonstrate a prevalent negative evaluation bias on model-generated benchmarks, as reflected by the blue rows (e.g., $\mathcal{M}_4$ in both tasks). We verify (see Appendix~\ref{sec:appendix-datacon} for details) that the reason lies in data contamination issue \citep{datacontamination}, which lead to the inflated relative performance $R(\mathcal{M}|\mathcal{M}^{\text{human}}, \mathcal{M}^{\text{human}},\mathcal{M}^{\text{ref}}_{1:K})$ of these models on human-annotated benchmarks, subsequently resulting in negative evaluation bias according to Eq.~\eqref{equation: B} \footnote{Models without data contamination consequently exhibit underestimated relative performance, as reflected by the red rows (e.g., $\mathcal{M}_1$ in both tasks).}. 
\begin{figure}[t]
\centering
\includegraphics[width=0.98\textwidth]{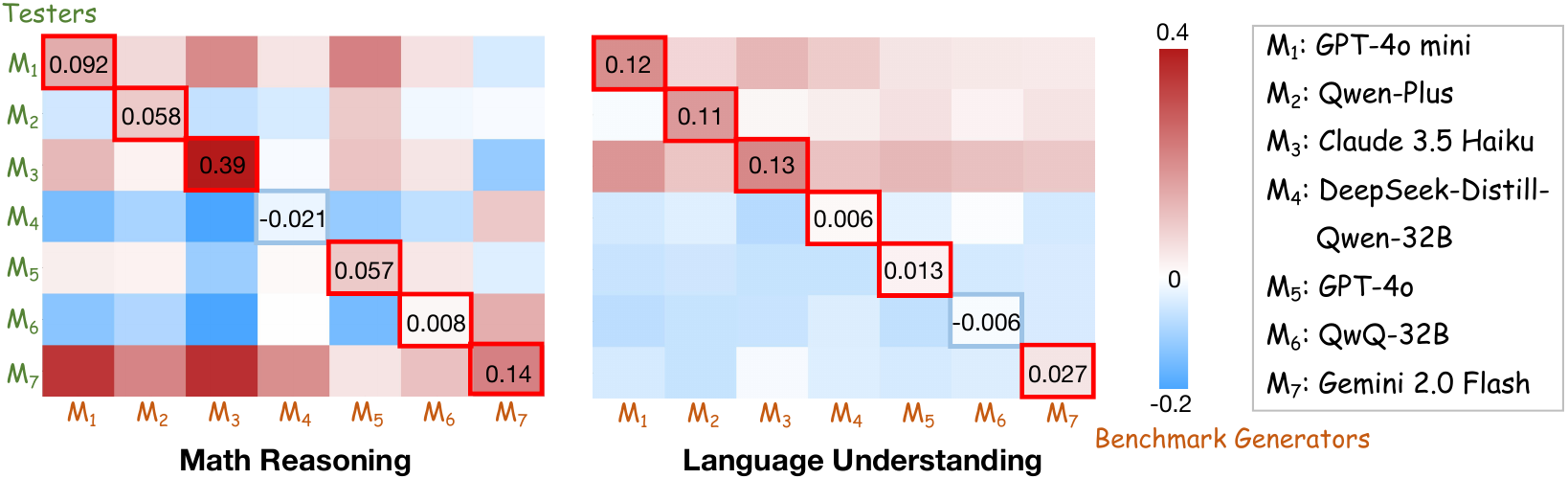} 
\caption{Evaluation bias of LLMs across tasks. The value at position $(\mathcal{M}_i,\mathcal{M}_j)$ represents $\mathcal{B}(\mathcal{M}_j|\mathcal{M}_i)$ (Eq.~\eqref{equation: B}), the evaluation bias of $\mathcal{M}_j$ on the benchmark generated by $\mathcal{M}_i$.}
\label{fig:format}
\end{figure}

\subsection{Disentangling of Self-Bias}
\label{sec:disbias}
After confirming the widespread presence of self-bias, we proceed to investigate its underlying causes.
Since a benchmark typically consists of a question set and a label set, and its presentation can be influenced by language style, we heuristically decouple self-bias into three sub-biases:

\noindent \textbf{Language style bias $\mathcal{B}^s$.} 
\textit{The language style of questions generated by a model may align with what it is familiar with, enabling it to better understand the questions and answer them correctly.}
We formally measure $\mathcal{B}^s$ as follows:
\begin{equation}
    \mathcal{B}^s(\mathcal{M}) = R(\mathcal{M}|\text{para}_{\mathcal{M}}(\mathcal{M}^{\text{human}}), \mathcal{M}^{\text{human}},\mathcal{M}^{\text{ref}}_{1:K}) - R(\mathcal{M}|\mathcal{M}^{\text{human}}, \mathcal{M}^{\text{human}},\mathcal{M}^{\text{ref}}_{1:K})
    \label{equation: B_s}
\end{equation}
where we measure how much $\mathcal{M}$ is overestimated on a human-constructed benchmark whose question set has been paraphrased by $\mathcal{M}$.

\noindent \textbf{Question domain bias $\mathcal{B}^q$.} 
\textit{The domain of questions generated by a model may largely align with the domain it excels in, making it likely to answer them correctly.} 
We formally measure $\mathcal{B}^q$ as follows:
\begin{equation}
    \mathcal{B}^q(\mathcal{M}) = R(\mathcal{M}|\text{para}_{\mathcal{M}^{\text{human}}}(\mathcal{M}), \mathcal{M}^{\text{human}},\mathcal{M}^{\text{ref}}_{1:K}) - R(\mathcal{M}|\mathcal{M}^{\text{human}}, \mathcal{M}^{\text{human}},\mathcal{M}^{\text{ref}}_{1:K})
    \label{equation: B_q}
\end{equation}
where we measure how much $\mathcal{M}$ is overestimated on a benchmark built from its self-generated questions (while paraphrased by humans to disentangle the impact of $\mathcal{B}^s$) and labels annotated by humans. 
Given the impracticality of manual annotating across multiple experimental settings, we adopt the powerful OpenAI o3-mini \citep{o3} as a proxy for human annotators.

\noindent \textbf{Wrong label bias $\mathcal{B}^l$.}
\textit{The model may produce incorrect labels, and the model's tendency to repeat the same mistakes during labeling and answering can lead to many false positives, inflating the accuracy.}
We formally measure $\mathcal{B}^l$ as follows:
\begin{equation}
    \mathcal{B}^l(\mathcal{M}) = R(\mathcal{M}|\mathcal{M}^{\text{human}}, \mathcal{M},\mathcal{M}^{\text{ref}}_{1:K}) - R(\mathcal{M}|\mathcal{M}^{\text{human}}, \mathcal{M}^{\text{human}},\mathcal{M}^{\text{ref}}_{1:K})
    \label{equation: B_l}
\end{equation}
where we measure how much $\mathcal{M}$ is overestimated on a benchmark constructed from human-written questions and $\mathcal{M}$-generated labels. 
In Appendix~\ref{sec:proof-selflabel}, we further theoretically prove that models tend to perform better on samples labeled by themselves.

\begin{table*}[hb]
    \caption{Quantitative results of the decoupled sub-biases of $\mathcal{B}$ across tasks.}\label{tab:exp1}
    \begin{center}
    \begin{small}
    \begin{sc}
    \setlength{\tabcolsep}{0.28em} 
    \begin{tabular}{cccccp{0.010pt}cccc}
    \toprule
     &\multicolumn{4}{c}{\textbf{Math Reasoning}}& &\multicolumn{4}{c}{\textbf{Language Understanding}}  \\
    &$\mathcal{B}$&$\mathcal{B}^s$&$\mathcal{B}^q$&$\mathcal{B}^l$&&$\mathcal{B}$&$\mathcal{B}^s$&$\mathcal{B}^q$&$\mathcal{B}^l$\\   
    \midrule
    Value&0.1032&0.0205&0.0247&0.0921&&0.0575&0.0061&0.0077&0.0545\\
    Relative Contribution (\%)&-&14.96&17.99&67.05&&-&8.967&11.29&79.74\\
    \bottomrule
    \end{tabular}
    \end{sc}
    \end{small}
    \end{center}
\end{table*}

Following the experimental setup as in \S\ref{sec:defselfbias}, Table~\ref{tab:exp1} reports the magnitudes and relative contributions of each sub-bias. 
As we hypothesized, all three sub-biases are indeed present and collectively constitute the self-bias illustrated in Figure~\ref{fig:format}. We further observe that their relative contributions follow the order: $\mathcal{B}^l > \mathcal{B}^q > \mathcal{B}^s$. This indicates that, although both language style and question domain biases have an impact, the high alignment between the model acting as labeler and tester contributes more significantly to the self-bias.

\section{\textsc{Silencer} Framework}
\subsection{Motivations}
\label{sec:motivation}
Since different LLMs generally exhibit distinct biases \citep{biassurvey}, the most straightforward strategy to mitigate self-bias is to ensemble benchmarks generated by multiple LLMs $\mathcal{M}_{1:T}$ to achieve bias neutralization.
However, this method suffers from two limitations. First, each sample still carries all three types of sub-biases from its generator, and the accumulation of these sub-biases leads to a distribution shift away from the unbiased sample domain. Meanwhile, since different LLMs exhibit varying degrees of self-bias, uniformly ensembling their generated benchmarks can be suboptimal.
To this end, we propose \textsc{Silencer} (as shown in Figure~\ref{fig:method}), a general framework for LLM-as-Benchmark-Generator methods that (1) individually mitigates the three sub-biases at the sample level (\S\ref{sec:method1}), and (2) performs weighted ensemble of benchmarks from different generators at the benchmark level (\S\ref{sec:method2}). The prompts used can be found in Appendix~\ref{sec:appendix-prompt}.
\subsection{Sample-wise Sub-biases Mitigation}
\label{sec:method1}
\paragraph{Attribute Integration.} 
Current LLM-as-Benchmark-Generator methods \citep{datagen,benchmaker} generally adopt an attribute-guided approach to ensure the diversity of the question domain \citep{attr}. Specifically, certain generator LLM $\mathcal{M}_t$ is first prompted to generate a set of attributes $A_t$ (e.g., topic) based on the task definition $\mathrm{TD}$, with each attribute corresponding to multiple candidate values (e.g., sports). Afterwards, a list of (attribute, value) pairs is sampled for guiding the LLM to generate a question $q_t$ meeting the attributes, and corresponding label $l_t$:
\begin{equation}
\begin{gathered}
    A_t = \text{AttributeGeneration}_{\mathcal{M}_t}
                \left(
                    \mathrm{TD}
                \right)\\
    (q_t,l_t) = \text{SampleGeneration}_{\mathcal{M}_t}\left(
            \text{AttributeSampling}
            \left(
                A_t
            \right)
    \right)
\end{gathered}
\end{equation}
Since the attributes largely determine the content of the generated question, we look to mitigate the question domain bias of $\mathcal{M}_t$ by suppressing the bias inherently contained in $A_t$:
\begin{equation}
    \bar{A}_t = \text{AttributeIntegration}_{\mathcal{M}_t}
                \left(A_{1:T},\mathrm{TD}
                \right)\\
\end{equation}
where $\mathcal{M}_t$ takes as input the attribute sets produced by all generators $\mathcal{M}_{1:T}$ and integrates the high-quality attributes aligned with the task definition $\mathrm{TD}$, thereby deriving an unbiased attribute set $\bar{A}_t$ for subsequent sample generation.

\paragraph{Cross Paraphrase.}
After each $\mathcal{M}_t$ completes its own benchmark generation $D_t$ based on $\bar{A}_t$
, we enhance the linguistic diversity of these benchmarks by having the generators paraphrase each other’s question sets without altering the original meaning:
\begin{equation}
    \bar{q}_t = \text{QuestionParaphrase}_{\mathcal{M}_{r}}(q_t), \quad r \in \{1,...,T\} \text{ chosen uniformly at random}
\end{equation}
Building on this, the question set of $D_t$ no longer exhibits a single language style, thereby significantly suppressing $\mathcal{B}^s$.

\paragraph{Label Calibration.}
Due to the limitations and invariance in knowledge and capabilities, the label $l_t$ provided by model $\mathcal{M}_t$ may be incorrect, and its predictions tend to exhibit consistent errors, leading to obvious $\mathcal{B}^l$ as shown in Table~\ref{tab:exp1}. To this end, we consider leveraging the predictions from other generators $p_t^{1:T}$ to assist $\mathcal{M}_t$ in calibrating the old label $l_t$:
\begin{equation}
\begin{gathered}
    p_t^i = \text{Prediction}_{\mathcal{M}_{i}}(\bar{q}_t)\\
    \bar{l}_t = \text{LabelCalibration}_{\mathcal{M}_{t}}(\bar{q}_t,l_t,p_t^{1:T})
\end{gathered}
\label{eq:pre}
\end{equation}
Through jointly analyzing and comparing multiple predictions (with their rationales), $\mathcal{M}_t$ is able to produce a more reliable label $\bar{l}_t$, which, even when incorrect, is less likely to align with its own erroneous predictions.

\begin{figure*}[t]
\centering
\includegraphics[width=1\textwidth]{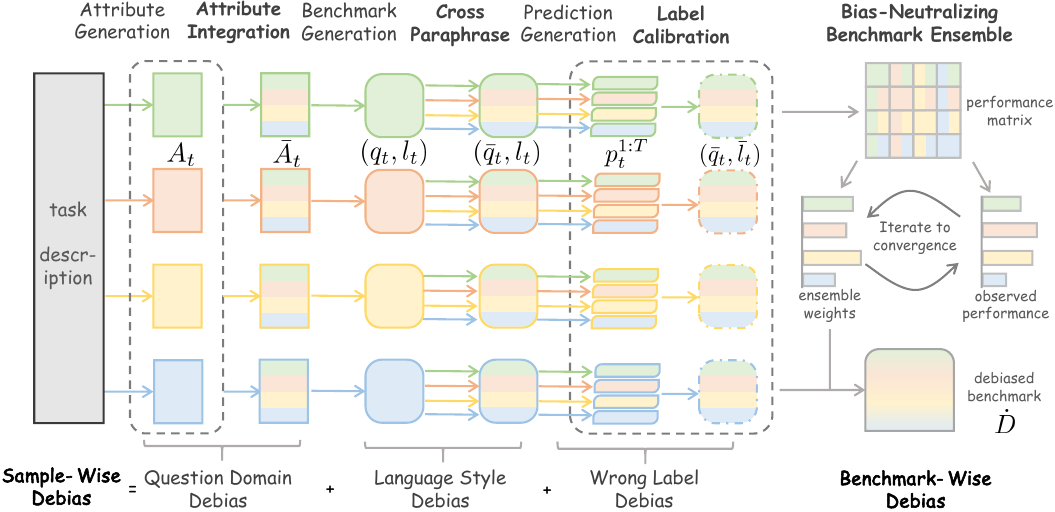} 
\vspace{-0.3cm}
\caption{Overall illustration of our \textsc{Silencer} framework. Each generator LLM is represented by a unique color. Colored arrows indicate the process by which a certain LLM takes the content on the left as input and produces the content on the right as output (excluding the gray arrows).}
\vspace{-0.3cm}
\label{fig:method}
\end{figure*}

\subsection{Benchmark-wise Self-bias Suppression}
\label{sec:method2}
After obtaining the benchmarks $\bar{D}_{1:T}$ in which sub-biases have been mitigated at the sample level, we consider how to ensemble them to further eliminate self-bias.
Ideally, benchmarks with less self-bias should be assigned greater weight during ensembling, so that the resulting benchmark exhibits minimal overall self-bias.
However, in real-world scenarios, we often lack high-quality human-annotated benchmark $D^{\text{human}}$ to evaluate the self-bias $\mathcal{B}_{1:T}$ of $\bar{D}_{1:T}$.
Therefore, we design the Bias-Neutralizing Ensemble Algorithm~\ref{alg:1}, which uses the ensembled benchmark as a proxy for $D^{\text{human}}$ and iteratively optimizes the weights $\alpha_{1:T}$ used for benchmark ensembling.

\begin{algorithm}
\small
    \caption{Bias-Neutralizing Ensemble Algorithm.}\label{alg:1}
        \begin{algorithmic}[1]
\Require Generated benchmarks $\boldsymbol{\bar{D}_{1:T}}$, Relative performance of $\mathcal{M}_{i}$ on $\bar{D}_{j}$ \boldsymbol{$x_{i}^{j}$} (for $i \in [1,T]$, $j \in [1,T]$),\newline
Desired benchmark size \boldsymbol{$N$}, Minimum value \boldsymbol{$\epsilon$} (default: 1e-6)
\Ensure Ensembled benchmark with low self-bias: \boldsymbol{$\dot{D}$}
\State $\alpha_i \leftarrow 0$, $\alpha_i^{\text{new}} \leftarrow \frac{1}{T}$ for $i \in [1,T]$
\While{$ \sum_{i=1}^T |\alpha_{i}^{\text{new}}-\alpha_{i}|>\epsilon$}
    \State $\alpha_{1:T} \leftarrow \alpha_{1:T}^{\text{new}} $
    \State $\bar{x}_i \leftarrow \sum_{j=1}^T \alpha_j\cdot x_i^j$ for $i\in [1,T]$
    \State $\alpha_{1:T}^{\text{new}} \leftarrow$ \texttt{UpdateAlpha}($x_{1:T}^{1:T}$, $\bar{x}_{1:T}$)
    \State $\alpha_{i}^{\text{new}} \leftarrow \frac{\alpha_{i}^{\text{new}}}{\sum_{j=1}^T \alpha_{j}^{\text{new}}}$ for $i \in [1,T]$
\EndWhile
\State $\dot{D} \leftarrow \bigcup_{i=1}^{T} \texttt{RandomSampling}(\bar{D}_i,\left\lfloor N\cdot \alpha_i \right\rfloor)$
\end{algorithmic}
\end{algorithm}
Specifically, we first initialize the weight $\alpha_i$ as $\frac{1}{T}$.
Then, we compute the weighted performance $\bar{x}_i$ of each $\mathcal{M}_i$ on all the generated benchmarks using $\alpha_{1:T}$ (model performance $x_{1:T}^{1:T}$ can be calculated from the predictions in Eq.~\eqref{eq:pre}).
Here, $\bar{x}_i$ essentially represents the performance of $\mathcal{M}_i$ on the ensembled benchmark $\dot{D}$ of current iteration.
Treating $\dot{D}$ as a proxy for $D^{\text{human}}$, we use the algorithm $\texttt{UpdateAlpha}(\cdot)$ to update $\alpha$. This process is repeated until convergence.

Ideally, if we can design an effective $\texttt{UpdateAlpha}(\cdot)$ such that $\alpha_{1:T}$ exhibits a clear negative correlation with $\mathcal{B}_{1:T}$, then each iteration will yield a ensembled benchmark $\dot{D}$ with lower self-bias. This, in turn, means that $\dot{D}$ can serve as a better proxy for $D^{\text{human}}$, thereby facilitating a more accurate estimation of $\alpha_{1:T}$ in the next iteration.
To identify the most effective $\texttt{UpdateAlpha}(\cdot)$, we explore the following three strategies in our early-stage experiments:

\noindent \textbf{Self-Bias.} 
Naively, we use $\bar{x}_i$ as the basis and compute the reciprocal of each model's estimated self-bias as its ensemble weight:
\begin{equation}
    \texttt{UpdateAlpha}^{\text{SelfBias}}(x_{1:T}^{1:T}, \bar{x}_{1:T})_i = \frac{1}{x_i^i-\bar{x}_i}
\end{equation}

\noindent \textbf{Accuracy.} 
The Dunning-Kruger Effect \citep{kruger} suggests that individuals with lower competence tend to overestimate their abilities. Thus, we directly use the accuracy as weight:
\begin{equation}
    \texttt{UpdateAlpha}^{\text{Accuracy}}(x_{1:T}^{1:T}, \bar{x}_{1:T})_i = \bar{x}_i
\end{equation}

\noindent \textbf{Evaluation Consistency.} 
As stronger self-bias tends to result in less reliable evaluation results, we quantify the self-bias of each generated benchmark by assessing its evaluation consistency with the ensembled benchmark, which in turn determines its ensemble weight:
\begin{equation}
    \texttt{UpdateAlpha}^{\text{Consistency}}(x_{1:T}^{1:T}, \bar{x}_{1:T})_i = \text{PearsonCorrelation}(x^i_{1:T},\bar{x}_{1:T})
\end{equation}
To verify the effectiveness of the three strategies, we compute the Pearson correlation between the reciprocal of the ensemble weights derived from each strategy and the true self-bias under the settings described in \S\ref{sec:defselfbias}. The results show: $\text{Consistency} (0.861)>\text{Self-Bias} (0.530)>\text{Accuracy} (0.343)$, indicating that Evaluation Consistency better reflects the degree of self-bias.
We hypothesize that the reason lies in the fact that, when there is a discrepancy between $\bar{x}$ and the true model performance, the Evaluation Consistency strategy can fully leverage the original model performance data $x_{1:T}^{1:T}$ compared to the Self-Bias and Accuracy strategies, thus partially mitigating error accumulation during the iterative process.
We also evaluate the correlation between the ensemble weights and the effectiveness of each generated benchmark (effectiveness of $\bar{D}_i$ is defined as the Pearson correlation between reference models' performances on $\bar{D}_i$ and on $D^{\text{human}}$): $\text{Consistency} (0.965)>\text{Self-Bias} (0.581)>\text{Accuracy} (0.241)$. 
This indicates that the ensemble weights derived from the Evaluation Consistency strategy also better reflect the evaluation effectiveness of each $\bar{D}_i$, thereby enabling the ensembled benchmark $\dot{D}$ to more accurately assess model performance. Based on these findings, we adopt $\texttt{UpdateAlpha}^{\text{Consistency}}(\cdot)$ as our default setting.
To ensure the non-negativity of the weights and the convergence of the iteration (the proof of convergence is provided in Appendix~\ref{sec:proof-conv}), we adopt the following form in practice, where $\epsilon$ is set as 1e-6:
\begin{equation}
    \texttt{UpdateAlpha}_{\text{Silencer}}^{\text{Consistency}}(x_{1:T}^{1:T}, \bar{x}_{1:T})_i = \text{ReLU}(\text{PearsonCorrelation}(x^i_{1:T},\bar{x}_{1:T}))+\epsilon
\end{equation}

\section{Experiments}
In this section, we conduct a comprehensive evaluation of the proposed \textsc{Silencer} framework. Specifically, we introduce the detailed experimental settings in \S\ref{sec:e1}. In \S\ref{sec:e2}, we validate the effectiveness of \textsc{Silencer} in mitigating self-bias and enhancing evaluation effectiveness, examine its generalizability across generators, tasks, and LLM-as-Benchmark-Generator methods, and perform ablations on its individual components. In \S\ref{sec:e3}, we further investigate other impact of \textsc{Silencer} on the generated benchmark with fine-grained metrics. In \S\ref{sec:e4}, we explore how varied settings influence the self-bias and evaluation effectiveness of the generated benchmarks in practice. 

\subsection{Experimental Settings}
\label{sec:e1}
\noindent \textbf{Tasks.}
We select three tasks to evaluate the cross-task effectiveness of \textsc{Silencer}, with each task paired with a high-quality human-annotated benchmark for comparison: math reasoning (MATH \citep{math}), language understanding (MMLU-Pro \citep{mmlupro}), and commonsense reasoning (HellaSwag \citep{hellaswag}).

\noindent \textbf{Models.}
As for generators, we select seven representative open- and closed-source LLMs: GPT-4o mini \citep{4omini}, Qwen-Plus \citep{qwen}, Claude 3.5 Haiku \citep{claude}, DeepSeek-Distill-Qwen-32B \citep{ds}, GPT-4o \citep{4o}, QwQ-32B \citep{qwq}, and Gemini 2.0 Flash \citep{gemini}. As for reference models, to ensure the robustness of experimental results, we additionally include nine more LLMs (see Appendix~\ref{sec:appendix-ref}) except for generators, resulting in a 16-model reference set. 
For each setting, we report the averaged result of 1,000 seeds, each time randomly selecting $T$ ($T > 2$) generators and $K$ reference models ($K > 7$) from the candidate pool.

\noindent \textbf{Baselines.} 
We primarily adopt BenchMaker \citep{benchmaker} as the baseline method for benchmark generation and evaluate its performance under the \textsc{Silencer} framework. We take the task descriptions used in \citet{benchmaker} as input and have the generators construct benchmarks of a specified size. 
We additionally include a simple and broadly applicable baseline method AttrPrompt \citep{attr} to further assess the cross-method generalizability of \textsc{Silencer}.

\noindent \textbf{Details.} 
Since multiple settings are involved, we default the benchmark size $N$ for each sub-task within a task (e.g., MATH-Algebra) to 50. We explore the impact of larger sizes in Section \S\ref{sec:e4}. 
To ensure fairness in the comparison, we keep the benchmark size consistent before and after ensembling.
The sampling temperature for the LLMs is set to 1.
In addition to self-bias, we also compute the Pearson correlation $\bm{r_p}$ of reference models' performance between model-generated and high-quality human-annotated benchmarks to assess the evaluation effectiveness of the generated benchmarks.

\subsection{Effectiveness and Generalizability of \textsc{Silencer}}
\label{sec:e2}
\noindent \textbf{Overall Effectiveness and Ablation Studies.}
The main experimental results are shown in Table~\ref{tab:exp2}. The benchmarks generated using the baseline methods exhibit significant self-bias, consistent with the findings in \S\ref{sec:defselfbias}.
The results in rows 2-5 demonstrate that our proposed sample-level strategies effectively mitigate different sub-biases, thereby reducing self-bias $\mathcal{B}$ and improving evaluation effectiveness $\bm{r_p}$.
Among them, the Label Calibration strategy yields the largest gain by alleviating the wrong label bias $\mathcal{B}^l$, which is consistent with the conclusion in \S\ref{sec:disbias} that $\mathcal{B}^l$ is the major contributor to self-bias.
Building on the mitigation of sub-biases, directly ensembling the model-generated benchmarks can substantially reduce $\mathcal{B}$ and significantly improve $\bm{r_p}$ (row 6), which is consistent with our hypothesis in Section~\ref{sec:motivation}.
Meanwhile, after applying our proposed reweighted ensemble strategy, $\mathcal{B}$ is reduced to nearly zero and $\bm{r_p}$ reaches the highest value across all tasks, verifying the effectiveness of Bias-Neutralizing Ensemble Algorithm.

\noindent \textbf{Generalizability.}
Across all settings, we observe that \textsc{Silencer} consistently yields significant reductions in $\mathcal{B}$ (from 0.113 to 0.009 on average) and improvements in $\bm{r_p}$ (from 0.655 to 0.833 on average), demonstrating its generalizability across tasks, generators and methods.

\begin{table*}[thb]
    \vspace{-0.4cm}
    \caption{Main results of the baseline methods with different modules of \textsc{Silencer}. $\mathcal{B}$ and $\bm{r_p}$ denote the average self-bias and evaluation effectiveness (Pearson correlation) across multiple settings. All results of $\bm{r_p}$ are statistically significant ($p$-value < 0.05).}\label{tab:exp2}
    \begin{center}
    \begin{small}
    \begin{sc}
    \setlength{\tabcolsep}{0.55em} 
    \begin{tabular}{lp{0.03pt}ccp{0.05pt}ccp{0.015pt}cc}
    \toprule
    &&\multicolumn{2}{c}{\textbf{Math}}& &\multicolumn{2}{c}{\textbf{Language}}& &\multicolumn{2}{c}{\textbf{Commonsense}}  \\
    \textbf{Methods}&&\multicolumn{2}{c}{\textbf{Reasoning}}& &\multicolumn{2}{c}{\textbf{Understanding}}& &\multicolumn{2}{c}{\textbf{Reasoning}} \\
&&$\mathcal{B}$\textbf{$\downarrow$}&$\bm{r_p}$\textbf{$\uparrow$}&&$\mathcal{B}$\textbf{$\downarrow$}&$\bm{r_p}$\textbf{$\uparrow$}&&$\mathcal{B}$\textbf{$\downarrow$}&$\bm{r_p}$\textbf{$\uparrow$}\\   
    \midrule    
    \multicolumn{10}{c}{Baseline Method: \textbf{BenchMaker}\citep{benchmaker}}\\
    $\text{Baseline}$&&0.1205&0.7863&&0.0802&0.6816&&0.1102&0.6479\\
    \textit{w.} attribute integration: $\mathcal{B}^q$&&0.1179&0.7892&&0.0748&0.6907&&0.1059&0.6557\\
    \textit{w.} cross paraphrase: $\mathcal{B}^s$&&0.1074&0.7952&&0.0759&0.6840&&0.1107&0.6511\\
    \textit{w.} label calibration: $\mathcal{B}^l$&&0.1109&0.8302&&0.0583&0.7058&&0.1021&0.6582\\
\textit{w.} $\mathcal{B}^q\ \ \mathcal{B}^s \ \ \mathcal{B}^l$&&0.1002&0.8371&&0.0517&0.7145&&0.0934&0.6627\\
    \textit{w.} $\mathcal{B}^q\ \ \mathcal{B}^s \ \ \mathcal{B}^l$ \textit{w.} ensemble&&0.0218&0.9216&&0.0076&0.7953&&0.0178&0.7826\\
    \textit{w.} \textsc{Silencer}&&\textbf{0.0140}&\textbf{0.9386}&&\textbf{0.0011}&\textbf{0.8340}&&\textbf{0.0053}&\textbf{0.8149}\\
    \hdashline
    \multicolumn{10}{c}{Baseline Method: \textbf{AttrPrompt}\citep{attr}}\\
    $\text{Baseline}$&&0.1427&0.6179&&0.1148&0.5843&&0.1093&0.6106\\
    \textit{w.} \textsc{Silencer}&&\textbf{0.0187}&\textbf{0.8641}&&\textbf{0.0125}&\textbf{0.7725}&&\textbf{0.0074}&\textbf{0.7762}\\
    \bottomrule
    \end{tabular}
    \end{sc}
    \end{small}
    \end{center}
    \vspace{-0.6cm}
\end{table*}

\subsection{A Closer Look at What \textsc{Silencer} Brings}
\label{sec:e3}
To gain deeper insights into how \textsc{Silencer} affects benchmark generation, we analyze it from four perspectives with fine-grained metrics (on Math Reasoning task by default).

\noindent \textbf{Sample Semantic Distribution.} The presence of $\mathcal{B}^l$ and $\mathcal{B}^q$ results in significant discrepancies in the semantic distributions of samples generated by different generators, as shown in Figure~\ref{fig: sub_figure1}. After applying Cross Paraphrase alone (Figure~\ref{fig: sub_figure2}) and both Cross Paraphrase and Attribute Integration simultaneously (Figure~\ref{fig: sub_figure3}), the separation among benchmarks from different LLMs is visibly reduced, indicating that $\mathcal{B}^l$ and $\mathcal{B}^q$ have been mitigated. We further compute the average Maximum Mean Discrepancy \citep{MMD} and Wasserstein Distance \citep{wass} between benchmarks from different generator pairs, both of which measure the divergence between two distributions. After applying \textsc{Silencer}, both metrics decrease (from 0.1174 to 0.1025 and from 12.267 to 10.903, respectively), providing additional evidence for this mitigation.

\begin{figure*}[hb]
    \centering
    \subfigure[Raw Generated Benchmark]{\includegraphics[width=0.32\hsize, height=0.26\hsize]{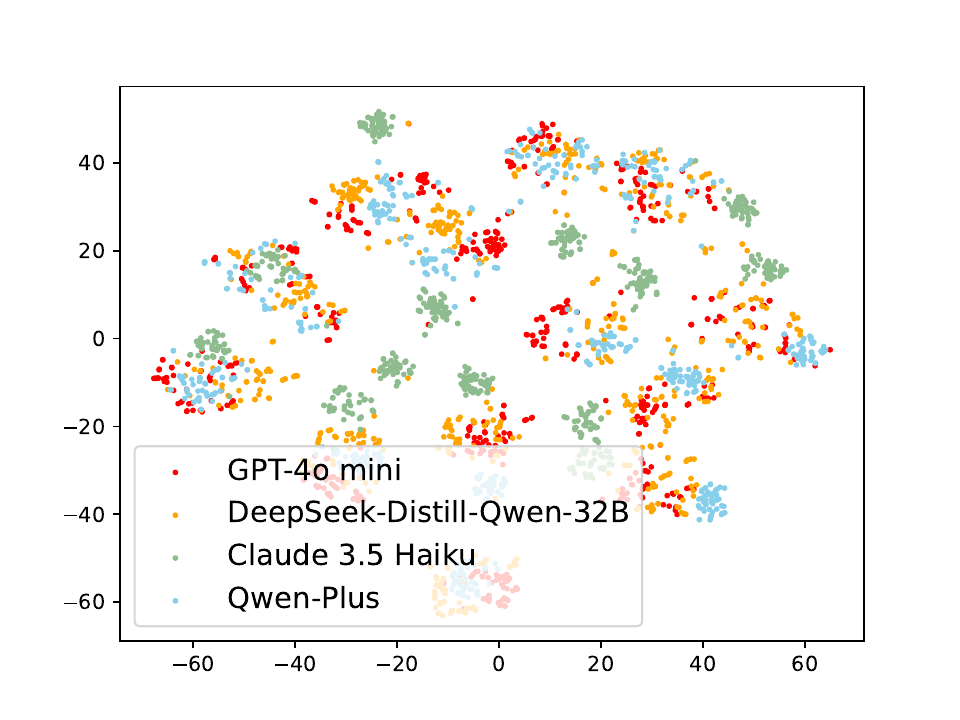}\label{fig: sub_figure1}} 
    \vspace{-3mm}
    \subfigure[Mitigate $\mathcal{B}^s$]{\includegraphics[width=0.32\hsize, height=0.26\hsize]{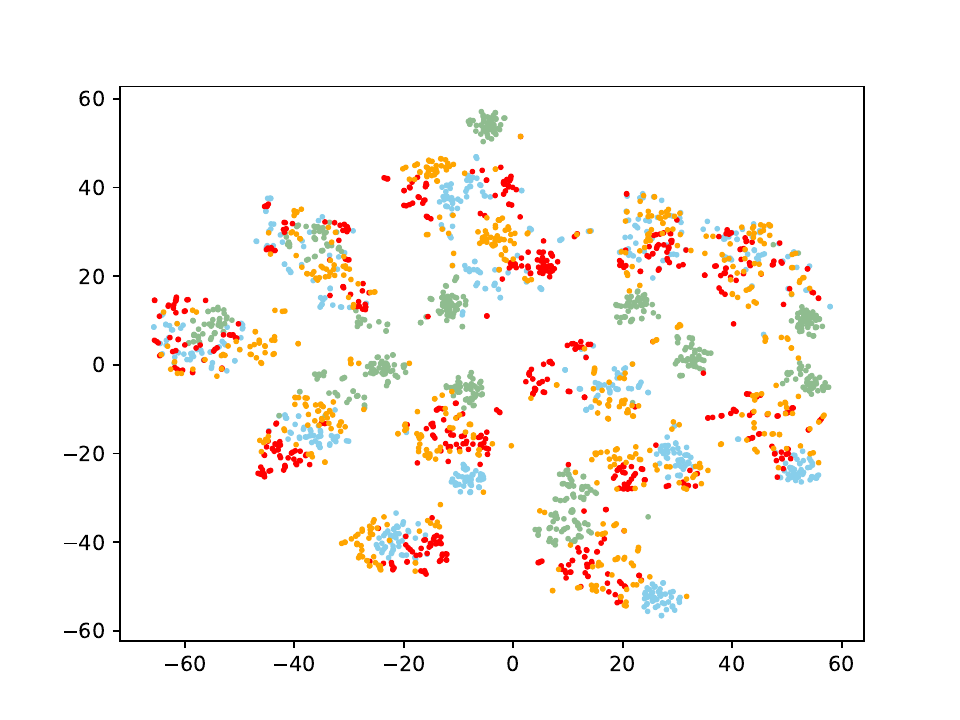}\label{fig: sub_figure2}} 
    \subfigure[Mitigate $\mathcal{B}^q$ and $\mathcal{B}^s$]{\includegraphics[width=0.32\hsize, height=0.26\hsize]{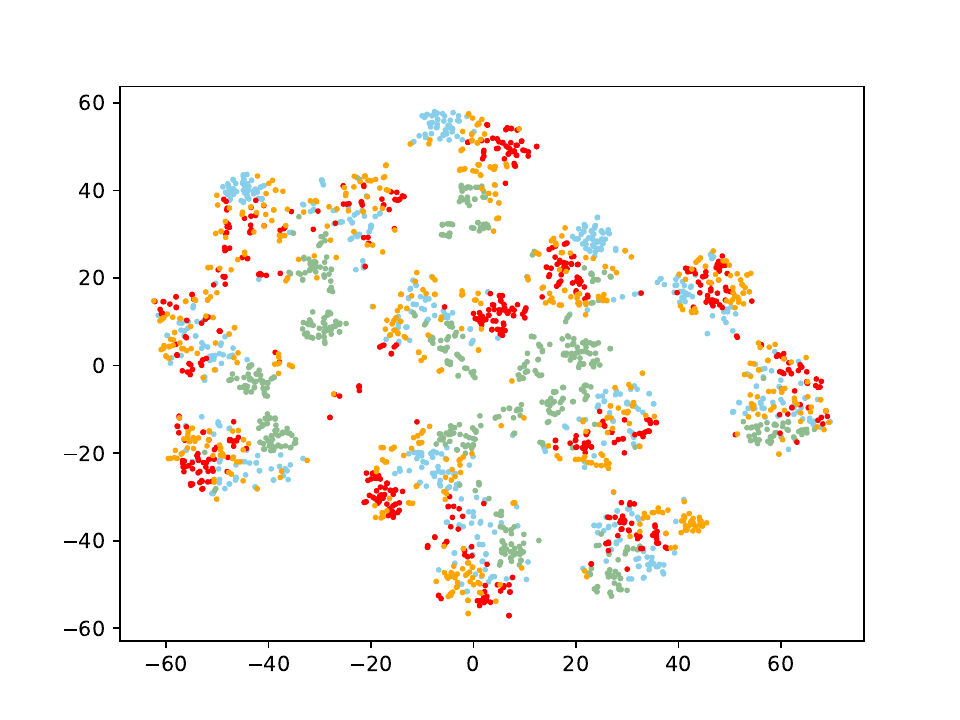}\label{fig: sub_figure3}}
    \caption{T-SNE visualization results of the raw generated benchmarks (a), those with mitigated language style bias (b), and those with both language style and question domain biases suppressed (c) on the Language Understanding task. The presence of 13 clusters is probably due to the task comprising 13 sub-tasks. We use text-embedding-ada-002 \citep{text-embedding-ada-002} as the embedding model.}
    
    \label{fig:exp1}
\end{figure*}

\noindent \textbf{Benchmark Diversity.}
By expanding the question domain and improving the monotonicity of language style, we observe a clear improvement in the diversity of benchmarks. As shown in the Table~\ref{tab:exp3}, after applying \textsc{Silencer}, the average word frequency entropy \citep{entropy} increases from 9.749 to 10.078, and the 2-gram count \citep{2gram} rise from 33,560 to 36,932.

\noindent \textbf{Label Accuracy.}
Besides suppressing $\mathcal{B}^l$, we further examine the impact of the Label Calibration strategy on label accuracy. We use OpenAI o3-mini as the judge model to assess label correctness. Experimental results in Table~\ref{tab:exp3} show that Label Calibration can also improve the label accuracy from 0.877 to 0.892, possibly because the model corrects some of its erroneous labels based on right predictions from other generators. This effect further promotes improved evaluation effectiveness.

\noindent \textbf{Ensemble Weight.}
Finally, we calculate the average Pearson correlation between the reciprocal  of the ensemble weights estimated by \textsc{Silencer} and the self-biases across all settings. The results show a high correlation of 0.8252, validating that \textsc{Silencer}, as expected, assigns greater weights to benchmarks with less self-bias, thereby leading to a higher-quality ensembled benchmark.

\begin{table*}[t]
    \caption{The impact of the three proposed sample-level strategies for mitigating sub-biases on benchmark diversity (entropy and 2-gram) and label accuracy.}\label{tab:exp3}
    \begin{center}
    \begin{small}
    \begin{sc}
    \setlength{\tabcolsep}{0.20em} 
    \begin{tabular}{lp{0.02pt}ccp{0.02pt}ccp{0.02pt}cc}
    \toprule
    \multirow{2}{*}{Generators}&&\multicolumn{2}{c}{\textbf{Entropy}\textbf{$\uparrow$}}& &\multicolumn{2}{c}{\textbf{2-gram}\textbf{$\uparrow$}}& &\multicolumn{2}{c}{\textbf{Label Accuracy}\textbf{$\uparrow$}} \\
&&Raw&\textit{w.} \textsc{Silencer}&&Raw&\textit{w.} \textsc{Silencer}&&Raw&\textit{w.} \textsc{Silencer}\\   
    \midrule    
    DeepSeek-Distill-Qwen-32B&&9.691&9.828&&25493&24747&&0.855&0.862\\
    Qwen-Plus &&9.307&10.149&&25488&36707&&0.819&0.846\\
    GPT-4o mini&&9.610&9.931&&23231&25742&&0.912&0.927\\
    Claude 3.5 Haiku&&9.274&9.989&&14172&25286&&0.833&0.884\\
 GPT-4o&&9.942&10.023&&55440&56010&&0.893&0.899\\
    Gemini 2.0 Flash&&10.300&10.363&&56985&53378&&0.900&0.891\\
    QwQ-32B&&10.120&10.269&&34111&36657&&0.927&0.936\\
    \textbf{Average}&&9.749&\textbf{10.078}&&33560&\textbf{36932}&&0.877&\textbf{0.892}\\
    \bottomrule
    \end{tabular}
    \end{sc}
    \end{small}
    \end{center}
\end{table*}

\begin{table*}[t]
    \vspace{-0.2cm}
    \caption{The impact of generator number $T$.}\label{tab:exp4}
    \begin{center}
    \begin{small}
    \begin{sc}
    \setlength{\tabcolsep}{0.3em} 
    \begin{tabular}{lccccc}
    \toprule
    Metrics&$T=3$&$T=4$&$T=5$&$T=6$&$T=7$ \\
    \midrule 
    Naive Ensemble: Self Bias $\mathcal{B}$&0.031&0.020&0.015&0.011&0.008\\
    Naive Ensemble: Evaluation Effectiveness $\bm{r_p}$&0.909&0.920&0.925&0.932&0.937 \\
    Reweighted Ensemble: Self Bias $\mathcal{B}$&0.030&0.016&0.010&0.008&0.005 \\
    Reweighted Ensemble: Evaluation Effectiveness $\bm{r_p}$&0.912&0.936&0.949&0.951&0.955\\
    Reweighted Ensemble: Weight Estimation Accuracy $\bm{r_p}$&0.453&0.734&0.884&0.922&0.948 \\
    \bottomrule
    \end{tabular}
    \vspace{-0.2cm}
    \end{sc}
    \end{small}
    \end{center}
\end{table*}
\subsection{Impact of Setting Variants}
\label{sec:e4}
\noindent \textbf{Generator Number $\bm{T}$.}
\textsc{Silencer} primarily suppresses self-bias by leveraging mutual bias neutralization among different generators. Therefore, we examine the effect of varying the generator number $T$.
As shown in the Table~\ref{tab:exp4}, with a fixed ensembled benchmark size $N$, both Naive Ensemble and Reweighted Ensemble strategies lead to a reduction in $\mathcal{B}$ as $T$ increases. This suggests that using more generators can introduce greater bias diversity, thereby better diluting self-bias and improving Evaluation Effectiveness $\bm{r_p}$. We also observe that a larger $T$ results in a higher consistency between the reciprocal of the estimated ensemble weight and self-bias (the last row), indicating that the proposed Reweighted Ensemble benefits more from scaling $T$ than Naive Ensemble.

\noindent \textbf{Benchmark Size $\bm{N}$.}
We also investigate the effect of scaling the benchmark size $N$. As $N$ increases ($50 \rightarrow 100 \rightarrow 200$), the self-bias $\mathcal{B}$ under the Naive Ensemble strategy remains nearly unchanged ($0.021$), while under the Reweighted Ensemble strategy, $\mathcal{B}$ shows a consistent decline ($0.014 \rightarrow 0.012 \rightarrow 0.011$). We hypothesize that the Reweighted Ensemble benefits from a larger $N$ because more samples enable better estimation of model performance, thereby improving the precision of ensemble weight estimation as we have observed: $0.788 \rightarrow 0.820 \rightarrow 0.839$.

\section*{Conclusions}
In this work, we systematically analyze and mitigate the Self-Bias of LLM-as-Benchmark-Generator. Specifically, we validate the existence of Self-Bias and attribute it to three sub-biases. Based on this, we propose \textsc{Silencer}, a framework that helps generate high-quality, self-bias-silenced benchmarks from multiple levels. We validate the effectiveness and generalizability of \textsc{Silencer} through experiments across various settings. We further explore the finer-grained impacts of \textsc{Silencer} on the generated benchmarks and provide insights into the setting variants selection.

\noindent \textbf{Limitations.} To mitigate self-bias and attain higher-quality benchmarks, our \textsc{Silencer} framework incurs approximately 30\% additional token costs. However, we believe this overhead is worthwhile because (1) users generally prioritize the accuracy of evaluation results over cost, and (2) benchmark sizes used in practice are typically not very huge. For example, we often use MATH-500 \citep{math500} instead of the full MATH benchmark. In such cases, the extra cost is acceptable.

\noindent \textbf{Broader Impacts.} We believe that the \textsc{Silencer} framework help mitigate potential biases in synthetic data, thereby reducing the risk of distributional shifts associated with internet-based corpora.

\bibliography{custom}
\bibliographystyle{acl_natbib}
\clearpage
\clearpage
\appendix
\section{LLM-as-Benchmark-Generator Baselines}
\label{sec:appendix-baselines}
We introduce the LLM-as-Benchmark-Generator baselines we used in the experiments as follows:
\subsection{AttrPrompt}
AttePrompt \citep{attr} is a widely used data synthesis method. Its core idea is to first prompt a generator model to produce a set of task-related attributes, each associated with several values; then, based on this, a random set of (attribute, value) pairs is selected each time as constraints to guide the model in generating samples that satisfy both task and attribute constraints.
\subsection{BenchMaker}
The BenchMaker \citep{benchmaker} method aims to generate high-quality benchmarks by optimizing multiple attributes such as diversity, accuracy, and difficulty. For a given task description, sample diversity is enhanced through attribute-driven generation and an in-batch diversity boosting strategy; benchmark difficulty and controllability are improved via a difficulty diffusion mechanism and strategy guidance; sample accuracy is elevated through stepwise self-correction and conflict-guided contrastive discrimination. Experiments demonstrate that this method produces high-quality benchmarks that are well-aligned with human annotations. In our experiments, we adopted the task descriptions following \citet{benchmaker}. Due to the use of a large number of generators, we used a benchmark size of 50 per sub-task instead of the 500 used in BenchMaker.

\section{Data Contamination Issue Identified in Our Preliminary Experiments}
\label{sec:appendix-datacon}
In our preliminary experiments, we observed that some models exhibited relatively better performance on the human-annotated benchmark compared to most (or even all) model-generated benchmarks, resulting in a general negative evaluation bias, as shown in Figure~\ref{fig:exp1}. We hypothesize that this may be due to data leakage specific to the human-annotated benchmark. To verify this, we employed the Benchmark-Specific Contamination detection method proposed by ConStat \citep{datacons}, which infers the expected performance on the human-annotated benchmark by fitting the performance distributions of various models on synthetic benchmarks. The deviation between the expected and actual performance is used to estimate the degree of contamination. On the Math Reasoning task, ConStat indicated statistically significant contamination (p-value < 0.05) for DeepSeek-Distill-Qwen-32B ($M_4$) and QwQ-32B ($M_6$). On the Language Understanding task, significant contamination was detected for DeepSeek-Distill-Qwen-32B ($M_4$), GPT-4o ($M_5$), QwQ-32B ($M_6$), and Gemini 2.0 Flash ($M_7$). These results align with the models exhibiting consistent negative evaluation bias in Figure~\ref{fig:exp1}, supporting our hypothesis.

\section{Proof Section}

\subsection{Self-Labeling Inflates Expected Accuracy}
\label{sec:proof-selflabel}
\noindent\textbf{Setup.}
Suppose we have $T$ models, denoted by $\{M_{1},M_{2},\dots,M_{T}\}$, each mapping an input $x$ to a probability distribution over possible outputs in a label space $\mathcal{Y}$. For each model $M_{i}$, let
\[
p_{i}(y \mid x) \;=\; \Pr(\text{model }M_{i}\text{ outputs }y\;\vert\;x),
\quad
y \in \mathcal{Y}.
\]
We examine two expected accuracy measures:

\medskip

\noindent(1) \textit{Self-labeling expectation} $E_{1}$:  
the case where each model both generates the label and predicts it:
\[
E_{1} \;=\; \frac{1}{T}\,\sum_{i=1}^{T}\,\sum_{y \in \mathcal{Y}} \underbrace{p_{i}(y \mid x)}_{\boldsymbol{label}}\,\underbrace{p_{i}(y \mid x)}_{\boldsymbol{predict}}
\;=\;
\frac{1}{T}\,\sum_{i=1}^{T}\,\sum_{y \in \mathcal{Y}} \bigl[p_{i}(y \mid x)\bigr]^{2}.
\]

\noindent(2) \textit{Cross-labeling expectation} $E_{2}$:
the case where a model $M_{j}$ generates the label and another model $M_{i}$ predicts it:
\[
E_{2} \;=\; \frac{1}{T^{2}}\,\sum_{i=1}^{T}\,\sum_{j=1}^{T}\,\sum_{y \in \mathcal{Y}} \underbrace{p_{j}(y \mid x)}_{\boldsymbol{label}}\,\underbrace{p_{i}(y \mid x)}_{\boldsymbol{predict}}.
\]

\medskip

\noindent\textbf{Claim.} We aim to prove that $E_{1} \ge E_{2}$.

\medskip

\noindent\textbf{Proof.}
Subtract $E_{2}$ from $E_{1}$:
\[
\begin{aligned}
E_{1} - E_{2}
&=\;\frac{1}{T}\,\sum_{i=1}^{T}\,\sum_{y \in \mathcal{Y}} \bigl[p_{i}(y \mid x)\bigr]^{2}
\;-\;
\frac{1}{T^{2}}\,\sum_{i=1}^{T}\,\sum_{j=1}^{T}\,\sum_{y \in \mathcal{Y}} p_{j}(y \mid x)\,p_{i}(y \mid x).
\end{aligned}
\]
Rewriting this via a common factorization yields
\[
\begin{aligned}
E_{1} - E_{2}
&=\;\frac{1}{2\,T^{2}}
\,\sum_{i=1}^{T}\,\sum_{j=1}^{T}\,\sum_{y \in \mathcal{Y}}
\Bigl[
\bigl(p_{i}(y \mid x)\bigr)^{2} 
+\bigl(p_{j}(y \mid x)\bigr)^{2}
-2\,p_{i}(y \mid x)\,p_{j}(y \mid x)
\Bigr] \\[3pt]
&=\;\frac{1}{2\,T^{2}}
\,\sum_{i=1}^{T}\,\sum_{j=1}^{T}\,\sum_{y \in \mathcal{Y}}
\Bigl[
p_{i}(y \mid x)\;-\;p_{j}(y \mid x)
\Bigr]^{2}\;\ge\; 0.
\end{aligned}
\]
Hence
\[
E_{1} \;-\; E_{2} \;\ge\; 0
\quad\Longrightarrow\quad
E_{1} \;\ge\; E_{2}.
\]
This proves that using each model as its own label generator cannot be worse (in terms of expected accuracy) than having labels generated by a different model.

\subsection{Convergence of the Proposed Bias-Neutralizing Ensemble Algorithm}
\label{sec:proof-conv}
\paragraph{Setup.}
Let $X \in [0,1]^{N \times N}$ be a non-negative matrix (the performance matrix in our scenario) whose columns are 
\(
x_1,\,x_2,\,\ldots,\,x_N \in \mathbb{R}^N.
\)
We use $N$ to denote the number of generators to avoid confusion with the transpose symbol.
Consider an ensemble-weight vector 
\(
\alpha=(\alpha_1,\dots,\alpha_N)^\top \in \Delta^{N-1},
\)
where
\[
\Delta^{N-1}
:=\bigl\{\,z \in \mathbb{R}^N : z_i\ge0\ \forall i,\; \sum_{i=1}^N z_i = 1\bigr\}.
\]
For $u,v\in\mathbb{R}^N$, let 
\[
\mathrm{corr}(u,v)
=\frac{\sum_{k=1}^N (u_k-\bar u)(v_k-\bar v)}
       {\sqrt{\sum_{k=1}^N (u_k-\bar u)^2}\,
        \sqrt{\sum_{k=1}^N (v_k-\bar v)^2}},
\quad
\bar u=\tfrac1N\sum_{k=1}^N u_k,\;
\bar v=\tfrac1N\sum_{k=1}^N v_k.
\]

\paragraph{Iterative process.}
Fix $\delta>0$.  
Define $F^{\!*}\!: \Delta^{N-1}\to\Delta^{N-1}$ by
\begin{equation}\label{eq:Fstar_relu}
\begin{gathered}
\bar x = X\alpha,\\[2pt]
\phi_i(\alpha)=\operatorname{ReLU}\!\bigl(\mathrm{corr}(\bar x,x_i)\bigr)+\delta,
\qquad i=1,\dots,N,\\[2pt]
\alpha_i^{\mathrm{new}}
=\frac{\phi_i(\alpha)}{\sum_{j=1}^N \phi_j(\alpha)},
\qquad i=1,\dots,N,
\end{gathered}
\end{equation}
where $\operatorname{ReLU}(t)=\max\{0,t\}$.

\subsubsection*{1.\;Existence of a fixed point}

\begin{lemma}[Existence]\label{lem:existence}
There exists at least one $\alpha^\ast\in\Delta^{N-1}$ with $F^{\!*}(\alpha^\ast)=\alpha^\ast$.
\end{lemma}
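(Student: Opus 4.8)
The plan is to invoke Brouwer's fixed point theorem on the map $F^{\!*}$. To do so, I would verify its two hypotheses in turn: that $F^{\!*}$ maps the simplex into itself, and that it is continuous.

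First I would check that $F^{\!*}$ is well-defined with codomain $\Delta^{N-1}$. By construction each $\phi_i(\alpha)=\operatorname{ReLU}(\mathrm{corr}(\bar x,x_i))+\delta \ge \delta > 0$, so the normalizing denominator $\sum_{j=1}^N \phi_j(\alpha)$ is strictly positive and the division is always legal. The output coordinates $\alpha_i^{\mathrm{new}}$ are then manifestly non-negative and sum to $1$, so $F^{\!*}(\alpha)\in\Delta^{N-1}$ for every $\alpha\in\Delta^{N-1}$.

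Next I would argue continuity of $F^{\!*}$ on $\Delta^{N-1}$. The map $\alpha\mapsto\bar x=X\alpha$ is linear, hence continuous. The correlation functional $(u,v)\mapsto\mathrm{corr}(u,v)$ is continuous wherever both arguments have nonzero (empirical) variance; $\operatorname{ReLU}$ is continuous; the constant shift by $\delta$ and the normalization by a denominator bounded below by $N\delta>0$ are continuous. Composing, $F^{\!*}$ is continuous — the one subtlety being the possible vanishing of the variance of $\bar x$ or of some $x_i$, where $\mathrm{corr}$ is undefined. I would handle this by adopting the standard convention (consistent with the ReLU clipping already in \eqref{eq:Fstar_relu}) that $\mathrm{corr}$ is set to $0$ in that degenerate case; since $\phi_i$ then equals $\delta$ there and $\operatorname{ReLU}(\mathrm{corr})\to 0$ as the variance $\to 0$, this extension is continuous, so $F^{\!*}$ remains continuous on all of $\Delta^{N-1}$.

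Finally, since $\Delta^{N-1}$ is a non-empty, compact, convex subset of $\mathbb{R}^N$ and $F^{\!*}:\Delta^{N-1}\to\Delta^{N-1}$ is continuous, Brouwer's fixed point theorem guarantees a point $\alpha^\ast\in\Delta^{N-1}$ with $F^{\!*}(\alpha^\ast)=\alpha^\ast$. The main obstacle is not the topology — Brouwer does all the work — but the bookkeeping around the degenerate zero-variance case; once the convention above is fixed, continuity is routine. (Note this lemma only yields existence; uniqueness or actual convergence of the iteration $\alpha^{(k+1)}=F^{\!*}(\alpha^{(k)})$ to $\alpha^\ast$ would require a separate contraction-type argument, presumably the content of the subsequent lemmas.)
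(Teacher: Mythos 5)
Your proof takes the same approach as the paper's: show that $F^{\!*}$ is a continuous self-map of the compact convex simplex $\Delta^{N-1}$ and invoke Brouwer. You are in fact more explicit than the paper's one-paragraph sketch on the self-map property (noting the denominator is bounded below by $N\delta>0$), which is a welcome addition.

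One small caution on the degeneracy you flag. Your claim that $\operatorname{ReLU}(\mathrm{corr})\to 0$ as the variance of $\bar x$ (or of some $x_i$) tends to zero is not correct: along a path $\bar x(\epsilon)=c\mathbf{1}+\epsilon\,w$ with $w$ demeaned and nonzero, $\mathrm{corr}(\bar x(\epsilon),x_i)=\mathrm{corr}(w,x_i)$ for every $\epsilon\neq 0$, so the limit at $\epsilon=0$ is path-dependent and the correlation has no continuous extension to constant vectors. Setting $\mathrm{corr}=0$ at the degenerate point restores well-definedness but not continuity. The paper sidesteps this with the parenthetical ``outside degenerate constant vectors,'' implicitly assuming no $\alpha\in\Delta^{N-1}$ makes $X\alpha$ constant and no column $x_i$ is constant; the honest fix is to state that hypothesis rather than to assert a continuous extension. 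Given that hypothesis, your argument (and the paper's) goes through.
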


\begin{proof}
The simplex $\Delta^{N-1}$ is compact and convex.  
The map $F^{\!*}$ is continuous because
(i) $\bar x=X\alpha$ depends linearly on $\alpha$,
(ii) $\mathrm{corr}(\bar x,x_i)$ is continuous in $\bar x$ (outside degenerate constant vectors),
and (iii) $\operatorname{ReLU}$ is continuous and $1$-Lipschitz on $\mathbb{R}$.  
Hence $F^{\!*}$ is a continuous self-map on a compact convex set, and Brouwer’s fixed-point theorem guarantees at least one fixed point.
\end{proof}

\subsubsection*{2.\;Uniqueness and contraction}

\begin{theorem}[Uniqueness \& Contraction]\label{thm:contraction}
With $\delta>0$, the map $F^{\!*}$ in \eqref{eq:Fstar_relu} is a strict contraction on $\Delta^{N-1}$ under the $\ell^1$ norm.  
Consequently, the fixed point in Lemma~\ref{lem:existence} is unique.
\end{theorem}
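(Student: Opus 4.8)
The plan is to exhibit $F^{\!*}$ as a composition $F^{\!*}=g\circ\phi$, where $\phi\colon\Delta^{N-1}\to[\delta,1+\delta]^N$ is $\alpha\mapsto\bigl(\operatorname{ReLU}(\mathrm{corr}(X\alpha,x_i))+\delta\bigr)_{i=1}^N$ and $g\colon\mathbb{R}^N_{>0}\to\Delta^{N-1}$ is the normalization $g(v)=v/\sum_k v_k$, and then to bound the $\ell^1$-Lipschitz constant of each factor on its relevant domain and multiply. If the resulting product is $<1$, then $F^{\!*}$ is a strict $\ell^1$-contraction on the compact convex set $\Delta^{N-1}$; Banach's fixed-point theorem then makes the fixed point of Lemma~\ref{lem:existence} unique and, as a bonus, shows that the iteration in Algorithm~\ref{alg:1} converges geometrically, which is what ``repeat until convergence'' needs.

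For the outer map $g$, restrict to the box $B=[\delta,1+\delta]^N$, which is convex, so it suffices to bound $\sup_{v\in B}\|Dg(v)\|_{\ell^1\to\ell^1}$. A direct computation gives $\partial g_i/\partial v_k=(\delta_{ik}S-v_i)/S^2$ with $S=\sum_k v_k$, so the operator norm (the maximum absolute column sum) equals $\max_k 2(S-v_k)/S^2\le 2/S\le 2/(N\delta)$, using $v_j\ge\delta$; the same bound also follows from an elementary telescoping estimate on $g_i(v)-g_i(w)$ without calculus. Integrating along the segment from $\phi(\alpha)$ to $\phi(\beta)$ (which stays in $B$) yields $\|g(\phi(\alpha))-g(\phi(\beta))\|_1\le\tfrac{2}{N\delta}\,\|\phi(\alpha)-\phi(\beta)\|_1$.

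For the inner map $\phi$, write it as $\alpha\mapsto\bar x=X\alpha\mapsto(\mathrm{corr}(\bar x,x_i))_i\mapsto(\operatorname{ReLU}(\cdot)+\delta)_i$. The map $\alpha\mapsto X\alpha$ is linear with $\ell^1$ operator norm $\max_i\|x_i\|_1\le N$, and as $\alpha$ ranges over $\Delta^{N-1}$ the vector $\bar x$ ranges over $K:=\operatorname{conv}\{x_1,\dots,x_N\}\subseteq[0,1]^N$. Writing $\mathrm{corr}(u,x_i)=\langle Cu,Cx_i\rangle/(\|Cu\|\,\|Cx_i\|)$ with $C$ the mean-centering projection, one checks that $\|\nabla_u\,\mathrm{corr}(u,x_i)\|$ is bounded by $1/\|Cu\|$, so $\mathrm{corr}(\cdot,x_i)$ is Lipschitz on $K$ with a constant controlled by $\rho:=\inf_{\alpha\in\Delta^{N-1}}\|CX\alpha\|_2$; composing with the $1$-Lipschitz $\operatorname{ReLU}$ produces a Lipschitz constant $L_\phi$ for $\phi$, finite \emph{provided} $\rho>0$. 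This is a non-degeneracy condition on the performance matrix (no column, and no convex combination of columns, is a constant vector, equivalently $X(\Delta^{N-1})\cap\operatorname{span}\{\mathbf 1\}=\varnothing$), and it should be carried as an explicit hypothesis, since it can fail for special $X$ even when no single column is constant. Combining the two estimates, $F^{\!*}$ is $\ell^1$-Lipschitz with constant at most $2L_\phi/(N\delta)$.

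The step that needs the most care — and the main obstacle — is showing this constant is actually $<1$: the normalization bound blows up like $1/(N\delta)$ as $\delta\to0$, so the crude product $2L_\phi/(N\delta)$ is not below $1$ for the tiny $\delta=10^{-6}$ used in practice unless $L_\phi$ is correspondingly small, which it need not be in general. Closing this gap will, I expect, require replacing $\Delta^{N-1}$ by the image $F^{\!*}(\Delta^{N-1})$ — a compact subset of the interior on which one argues that the correlations $\mathrm{corr}(X\alpha,x_i)$ vary only slightly, so that $\phi$ is nearly constant there — or imposing a quantitative hypothesis forcing the columns $x_i$ to be mutually well-correlated; a purely ``$\delta>0$'' argument through this decomposition looks too optimistic, so a sharper analysis or an added assumption is likely needed. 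An alternative route to uniqueness that sidesteps any explicit constant is the compactness argument: $\alpha\mapsto\|F^{\!*}(\alpha)-\alpha\|_1$ attains its minimum on $\Delta^{N-1}$, and the mere \emph{weak} contractivity $\|F^{\!*}(\alpha)-F^{\!*}(\beta)\|_1<\|\alpha-\beta\|_1$ for all $\alpha\ne\beta$ would force that minimum to be $0$ and the fixed point to be unique — although establishing even the weak inequality still confronts the same normalization-blowup issue.
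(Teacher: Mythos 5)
Your analysis is substantially correct, and it exposes a genuine gap --- one that is also present in the paper's own proof, which you have not seen but whose ``sketch'' relies on the same type of argument you ran into trouble with. The paper decomposes the update as a convex combination
\[
F^{\!*}(\alpha)=(1-\lambda_\alpha)\,\tfrac{\mathbf 1}{N}+\lambda_\alpha\,h(\alpha),
\qquad
h(\alpha)=\frac{\phi(\alpha)-\delta\mathbf 1}{\|\phi(\alpha)-\delta\mathbf 1\|_1},
\qquad
\lambda_\alpha=\frac{\|\phi(\alpha)-\delta\mathbf 1\|_1}{\|\phi(\alpha)-\delta\mathbf 1\|_1+N\delta},
\]
observes that $\sup_\alpha\lambda_\alpha=:q<1$ (true, since $\|\phi(\alpha)-\delta\mathbf 1\|_1\le N$ gives $q\le 1/(1+\delta)$), and then asserts that ``a direct calculation yields $\|F^{\!*}(\alpha)-F^{\!*}(z)\|_1\le q\|\alpha-z\|_1$.'' That last step is unjustified. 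Bounding the barycentric coefficient $\lambda_\alpha$ away from $1$ shows the image $F^{\!*}(\Delta^{N-1})$ is contained in a simplex shrunk toward the uniform point by a factor $q$; it says nothing about the Lipschitz constant of $F^{\!*}$, because both $\lambda_\alpha$ and $h(\alpha)$ vary with $\alpha$, and $h$ --- a normalization of ReLU-clipped correlations --- can have a large $\ell^1$ Lipschitz constant. In other words, a map can send the whole simplex into a small ball and still fail to be contractive inside that ball.

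Your decomposition $F^{\!*}=g\circ\phi$ makes this obstacle explicit. Your $2/(N\delta)$ bound on the normalization $g$ is correct (and essentially tight when one coordinate of $\phi$ is near its floor $\delta$ while the sum $S$ is near $N\delta$), and your observation that the inner Lipschitz constant $L_\phi$ requires a non-degeneracy hypothesis $\rho=\inf_{\alpha}\|CX\alpha\|_2>0$ is a real omission in the paper's setup: with $C$ the centering projection, $\mathrm{corr}(\cdot,x_i)$ is not even continuous at vectors whose centered part vanishes. The resulting bound $2L_\phi/(N\delta)$ diverges as $\delta\to 0$, which shows the theorem's hypothesis ``$\delta>0$'' alone cannot imply a strict contraction for arbitrary $X$; a quantitative condition tying $\delta$ to $L_\phi$ (or a restriction to a sub-region where the correlations vary slowly, as you suggest) is genuinely needed. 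Your alternative suggestion --- proving only the weak inequality $\|F^{\!*}(\alpha)-F^{\!*}(z)\|_1<\|\alpha-z\|_1$ and using compactness to get a unique fixed point --- is a sensible route to uniqueness and global convergence without a quantitative rate, but as you note it confronts the same $1/(N\delta)$ blow-up, so it too would need the image-restriction or a curvature hypothesis on $X$. In short: the gap you flagged is not a failure of your approach; it is a gap in the claim itself, and the paper's sketch does not close it either.
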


\begin{proof}[Sketch]
Take $\alpha,z\in\Delta^{N-1}$ and set
\(
c_i(\alpha)=\mathrm{corr}(X\alpha,x_i),
\;
c_i(z)=\mathrm{corr}(Xz,x_i).
\)
Because $\operatorname{ReLU}$ is $1$-Lipschitz,
\[
|\phi_i(\alpha)-\phi_i(z)|
\;=\;
\bigl|\operatorname{ReLU}(c_i(\alpha))-\operatorname{ReLU}(c_i(z))\bigr|
\le |c_i(\alpha)-c_i(z)|.
\]
The mapping $\alpha\mapsto X\alpha$ is linear, while the correlation is Lipschitz on the bounded set $[0,1]^N$.  
Thus there is $L<\infty$ with
\(
\|\phi(\alpha)-\phi(z)\|_1\le L\|\alpha-z\|_1.
\)

Now decompose the update as
\[
F^{\!*}(\alpha)
=(1-\lambda_\alpha)\frac{\mathbf 1}{N}
+\lambda_\alpha\,\frac{\phi(\alpha)-\delta\mathbf 1}{\|\phi(\alpha)-\delta\mathbf 1\|_1},
\quad
\lambda_\alpha
=\frac{\|\phi(\alpha)-\delta\mathbf 1\|_1}
      {\|\phi(\alpha)-\delta\mathbf 1\|_1+N\delta}\in[0,1).
\]
Because $\delta>0$, we have $\sup_\alpha\lambda_\alpha=:q<1$.  
A direct calculation using the above representation yields
\(\|F^{\!*}(\alpha)-F^{\!*}(z)\|_1\le q\|\alpha-z\|_1.\)
Hence $F^{\!*}$ is a strict contraction, so it admits exactly one fixed point.
\end{proof}

\subsubsection*{3.\;Global convergence to the unique fixed point}

\begin{theorem}[Global convergence]\label{thm:global}
Let $\alpha^{(0)}\in\Delta^{N-1}$ and define $\alpha^{(k+1)}=F^{\!*}(\alpha^{(k)})$.  
There exists a unique $\alpha^\ast\in\Delta^{N-1}$ with $F^{\!*}(\alpha^\ast)=\alpha^\ast$, and
\[
\|\alpha^{(k)}-\alpha^\ast\|_1
\le q^{\,k}\,\|\alpha^{(0)}-\alpha^\ast\|_1,
\qquad 0<q<1,
\]
so the convergence is geometric.
\end{theorem}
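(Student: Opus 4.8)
The plan is to derive Theorem~\ref{thm:global} directly from Theorem~\ref{thm:contraction} via the Banach fixed-point theorem, since all the hard work of establishing the contraction property has already been done. First I would invoke Theorem~\ref{thm:contraction}, which gives us a constant $q \in (0,1)$ such that $\|F^{\!*}(\alpha) - F^{\!*}(z)\|_1 \le q\,\|\alpha - z\|_1$ for all $\alpha, z \in \Delta^{N-1}$, together with the existence and uniqueness of a fixed point $\alpha^\ast$ (existence itself coming from Lemma~\ref{lem:existence}, and uniqueness being forced by the strict contraction since two distinct fixed points would satisfy $\|\alpha_1^\ast - \alpha_2^\ast\|_1 = \|F^{\!*}(\alpha_1^\ast) - F^{\!*}(\alpha_2^\ast)\|_1 \le q\,\|\alpha_1^\ast - \alpha_2^\ast\|_1 < \|\alpha_1^\ast - \alpha_2^\ast\|_1$, a contradiction).

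Next I would establish the geometric decay by a one-line induction. The base case $k=0$ is trivial. For the inductive step, since $\alpha^\ast = F^{\!*}(\alpha^\ast)$ is a fixed point and $\alpha^{(k+1)} = F^{\!*}(\alpha^{(k)})$, the contraction bound gives
\[
\|\alpha^{(k+1)} - \alpha^\ast\|_1 = \|F^{\!*}(\alpha^{(k)}) - F^{\!*}(\alpha^\ast)\|_1 \le q\,\|\alpha^{(k)} - \alpha^\ast\|_1 \le q^{k+1}\,\|\alpha^{(0)} - \alpha^\ast\|_1,
\]
using the inductive hypothesis in the last step. This closes the induction and yields the claimed bound $\|\alpha^{(k)} - \alpha^\ast\|_1 \le q^k\,\|\alpha^{(0)} - \alpha^\ast\|_1$, from which $\alpha^{(k)} \to \alpha^\ast$ geometrically as $q^k \to 0$. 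I would also note that the iterates remain in $\Delta^{N-1}$ throughout, since $F^{\!*}$ is a self-map of the simplex by construction (the normalization in \eqref{eq:Fstar_relu} ensures the output coordinates are non-negative and sum to one), so the bound is meaningful at every step.

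There is essentially no obstacle here: the statement is the standard Banach fixed-point conclusion, and the genuinely delicate part — verifying that $F^{\!*}$ is a strict $\ell^1$-contraction with a uniform constant $q < 1$, which hinges on the $\delta > 0$ regularization keeping $\sup_\alpha \lambda_\alpha < 1$ and on the Lipschitz continuity of the correlation map on the bounded domain $[0,1]^N$ — is already handled in Theorem~\ref{thm:contraction}. The only point I would be slightly careful about in a full write-up is the implicit assumption that the correlation is well-defined along the iteration (i.e.\ neither $X\alpha$ nor the columns $x_i$ degenerate to constant vectors), which the earlier proof sketch already flags as the one genuine non-degeneracy hypothesis; under that standing assumption the argument is immediate.
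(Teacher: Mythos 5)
Your proposal matches the paper's proof exactly: both invoke Theorem~\ref{thm:contraction} to get the $\ell^1$-contraction and then apply Banach's fixed-point theorem on the complete metric space $\Delta^{N-1}$ to conclude uniqueness and geometric convergence. You spell out the standard induction that the paper leaves implicit, but the argument is the same.
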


\begin{proof}
By Theorem~\ref{thm:contraction}, $F^{\!*}$ is a contraction on a complete metric space, hence Banach’s fixed-point theorem applies directly.
\end{proof}

\paragraph{Remark.}
If for some iterate all correlations are non-positive, $\operatorname{ReLU}$ sets them to $0$, so $\phi(\alpha)\equiv\delta\mathbf 1$ and $F^{\!*}$ maps \emph{every} point to the uniform distribution.  
In that (rare) event the process converges in a single step, which is even faster than the geometric rate guaranteed above.

\section{Reference Model List}
\label{sec:appendix-ref}
Apart from the seven generator LLMs, we also include the following LLMs in our reference model set:
\begin{enumerate}[itemsep=2mm, parsep=0mm]

\item \textbf{doubao-1-5-pro-32k-250115.} \url{https://www.volcengine.com/product/doubao}

\item \textbf{DeepSeek-V3-0324.} \url{https://huggingface.co/deepseek-ai/DeepSeek-V3-0324}

\item \textbf{doubao-1-5-lite-32k-250115.} \url{https://www.volcengine.com/product/doubao}

\item \textbf{DeepSeek-R1-Distill-Llama-70B.} \url{https://huggingface.co/deepseek-ai/DeepSeek-R1-Distill-Llama-70B}

\item \textbf{Qwen2.5-72B-Instruct.} \url{https://huggingface.co/Qwen/Qwen2.5-72B-Instruct}

\item \textbf{Llama-3.1-70B-Instruct.} \url{https://huggingface.co/meta-llama/Llama-3.1-70B-Instruct}

\item \textbf{DeepSeek-R1-Distill-Llama-8B.} \url{https://huggingface.co/deepseek-ai/DeepSeek-R1-Distill-Llama-8B}

\item \textbf{Athene-70B.} \url{https://huggingface.co/Nexusflow/Athene-70B}

\item \textbf{gemma-2-27b-it.} \url{https://huggingface.co/google/gemma-2-27b-it}

\end{enumerate}

\section{Prompt List}
\label{sec:appendix-prompt}

\textbf{Prompt for Attribute Integration}
\begin{lstlisting}
Your task is to help create a benchmark with multiple-choice questions about "{{ability}}". Specifically:

#### ***Task*** {{task content}}
#### ***Question Content*** {{task content analysis}}
#### ***Options*** {{option analysis}}

Below is a set of attributes and their corresponding values.

{{attributes}}

step 1. Please evaluate them sequentially on two dimensions:

Dimension 1: Assess whether the attribute is closely related to the given task and whether it serves as an appropriate attribute for controlling the diversity of generated sample.
Dimension 2: Evaluate whether the values corresponding to the attribute are comprehensive. If not, suggest any additional revisions that should be included.

step 2. Based on this evaluation, please consolidate the high-quality attributes and their values. The consolidation process should include the following steps:
    1. For attributes with high redundancy, merge them (particularly focusing on merging their values).
    2. Provide the optimized attributes and their corresponding values.

step 3. Finally, please output all the attributes that you consider relevant and meaningful for evaluating the given task along with their values. The output should include no fewer than 8 attributes, and the more, the better.

Please present the final output of step 3 in the following format (Use '###The Final Attributes###' to separate this part from the previous analyses and end with ###End of Final Attributes###):

###The Final Attributes###
{Name of Attribute 1}: {Value 1}##{Value 2}##{Value 3}...
(e.g., Reasoning Steps: 1-2##3-4##5-6##>6)
{Name of Attribute 2}: {Value 1}##{Value 2}##{Value 3}...
... ...
{Name of Attribute N}: {Value 1}##{Value 2}##{Value 3}...
###End of Final Attributes###
\end{lstlisting}

\textbf{Prompt for Cross Paraphrase}
\begin{lstlisting}
You are an assistant skilled in rewriting samples. Given a sample:
### Sample Start ###
Question: {input question content}
Options: {input options content}
Label: {input correct option}
### Sample End ###

Your task is to execute the following process for the given sample:
Step 1: Analyze the question and understand the corresponding correct answer.
Step 2: Think about how to rewrite the question while keeping the original meaning of the question, options, and label unchanged. Focus on language style transformation or rephrasing (e.g., synonym substitution), but ensure the correctness of the rewritten sample remains intact.
Step 3: Output the rewritten question in the following format (you can also refer to the template below):

Template:
### Sample Start ###
Question:
{question content}
Options:
{options content}
Label:
{correct option}
### Sample End ###
\end{lstlisting}

\textbf{Prompt for Label Calibration}
\begin{lstlisting}
Question:
{{question}}

Candidate Responses:
{{candidates}}

Your task is conduct the steps below strictly:
1. Read the given Question
2. Analyze all the given Candidate Responses One by One
3. Based on the given Candidate Responses, determine the correct answer through comprehensive analysis
4. Output your final answer in the format: **My answer is ###{{option}}###**
\end{lstlisting}
\end{document}